\title{Communication Efficient Distributed Agnostic Boosting}
\author{
Shang-Tse Chen \\
Georgia Institute of Technology\\ \small{schen351@gatech.edu} \and
Maria-Florina Balcan  \\
Carnegie Mellon University \\
\small{ninamf@cs.cmu.edu} \and
Duen Horng Chau  \\ 
Georgia Institute of Technology\\
\small{polo@gatech.edu}
}
\newtheorem{theorem}{Theorem}
\newtheorem{lemma}{Lemma}
\newtheorem{definition}{Definition}
\begin{document}

\maketitle

\begin{abstract}
We consider the problem of learning from distributed data in the agnostic setting, i.e., in the presence of arbitrary forms of noise.
Our main contribution is a general distributed boosting-based procedure for learning an arbitrary concept space, that is simultaneously noise tolerant, communication efficient, and computationally
efficient. This improves significantly over prior works that were either communication efficient only in noise-free scenarios or computationally prohibitive.
Empirical results on large synthetic and real-world datasets demonstrate the effectiveness and scalability of the proposed approach.
\end{abstract}

\section{Introduction}
Distributed machine learning has received an increasing amount of attention in this ``big data'' era~\cite{jordan2015machine}.
The most common use case of distributed learning is when the data cannot fit into a single machine, or when one wants to speed up the training process by utilizing parallel computation of multiple machines~\cite{alexps14,zdjw13,zdjw12}.
In these cases, one can usually freely distribute the data across entities, and an evenly distributed partition would be a natural choice.

In this paper, we consider a different setting where the data is inherently distributed across different locations or entities. Examples of this scenario include scientific data gathered by different teams, or customer information of a multinational corporation obtained in different countries. The goal is to design an efficient learning algorithm with a low generalization error over the union of the data.
Note that the distribution of the data from each source may be very different.
Therefore, to deal with the worst-case situation, we assume the data can be adversarially partitioned. This scenario has been studied for different tasks, such as supervised learning~\cite{BalcanBFM12,distfw15, Daume12}, unsupervised learning~\cite{disClustering13, balcan2014improved}, and optimization~\cite{Boyd2011, jaggi2014communication}.

Traditional machine learning algorithms often only care about sample complexity and computational complexity. However, since the bottleneck in the distributed setting is often the communication between machines~\cite{BalcanBFM12}, the theoretical analysis in this paper will focus on communication complexity.
A baseline approach in this setting would be to uniformly sample examples from each entity and perform centralized learning at the center.
By the standard VC-theory, a sampling set of size $O(\frac{d}{\epsilon^2}\log\frac{1}{\epsilon})$ is sufficient.
The communication complexity of this approach is thus $O(\frac{d}{\epsilon^2}\log\frac{1}{\epsilon})$ examples.

More advanced algorithms with better communication complexities have been proposed in recent works~\cite{BalcanBFM12,Daume12}. For example, \cite{BalcanBFM12} proposes a generic distributed boosting algorithm that achieves communication with only logarithmic dependence on $1/\epsilon$ for any concept class.
Unfortunately, their method only works in the standard realizable PAC-learning setting, where the data can be perfectly classified by a function in the hypothesis set and is noiseless.  This is because many boosting algorithms are vulnerable to noise~\cite{Dietterich00, long2008random}. The realizable case is often unrealistic in real-world problems.
Therefore, we consider the more general agnostic learning setting \cite{kearns94}, where there is no assumption on the target function. Since it is impossible to achieve an arbitrary error rate $\epsilon$, the goal in this setting is to  find a hypothesis with error rate close to $opt(H)$, the minimum error rate achievable within the hypothesis set $H$. The error bound is often in the form of $O(opt(H)) + \epsilon$.
Balcan et al. \cite{BalcanBFM12} propose an algorithm based on the robust generalized halving algorithm with communication complexity of $\tilde{O}(k\log({|H|})\log(1/\epsilon))$ examples. However, the algorithm works only for a finite hypothesis set $H$ and is computationally inefficient.

We propose a new distributed boosting algorithm that works in the agnostic learning setting. While our algorithm can handle this much more difficult and more realistic scenario, it enjoys the same communication complexity as in \cite{BalcanBFM12} that is logarithmic in  $1/\epsilon$ and exponentially better than the natural baselines. The algorithm is computationally efficient and works for any concept class with a finite VC-dimension.
The key insight, inspired by \cite{BalcanBFM12}, is that a constant (independent of $\epsilon$) number of examples suffice to learn a weak hypothesis, and thus if the boosting algorithm only needs $O(\log \frac{1}{\epsilon})$ iterations, we obtain the desired result.

A key challenge in this approach is that most agnostic boosting algorithms either have poor error bound guarantees or require too many iterations. The first agnostic boosting algorithm was proposed in \cite{ben2001agnostic}. Although the number of iterations is $O(\log\frac{1}{\epsilon})$ and is asymptotically optimal, their bound on the final error rate is much weaker: instead of $O(opt(H)) + \epsilon$, the bound is $O(opt(H)^{c(\beta)}) + \epsilon$, where $c(\beta)=2(1/2-\beta)^2/\ln(1/\beta -1)$. 
Some subsequent works \cite{Kanade09, Gavinsky03} significantly improve the bound on the error rate. However, their algorithms all require $O(1/\epsilon^2)$ iterations, which can in turn result in $O(1/\epsilon^2)$ communication in the distributed setting. Fortunately, we identify a very special boosting algorithm~\cite{Kale07} that runs in $O(\log \frac{1}{\epsilon})$ iterations.
This algorithm was analyzed in the realizable case in the original paper, but has later been noted to be able to work in the agnostic setting~\cite{Feldman10}\footnote{In the prior version of this paper which appeared in AISTATS 2016, we claimed that we were the first to show its guarantees in the agnostic setting. We thank the correction from the author of~\cite{Feldman10} that, although not explicitly proved or shown as a theorem, the feasibility of the algorithm in the agnostic setting has already been discussed in~\cite{Feldman10}.}.
We show how to adapt it to the distributed setting and obtain a communication efficient distributed learning algorithm with good agnostic learning error bound. 
Our main contributions are summarized as follows.

\begin{itemize}


\item We identify a centralized agnostic boosting algorithm and show that it can be elegantly adapted to the distributed setting. This results in the first algorithm that is both computationally efficient {\em and} communication efficient to learn a general concept class in the distributed agnostic learning setting.

\item Our proposed algorithm, which is a boosting-based approach, is flexible in that it can be used with various weak learners. Furthermore, the weak learner only needs to work in the traditional centralized setting rather than
in the more challenging distributed setting. This makes it much easier to design new algorithms for different concept classes in the distributed setting.

\item We confirm our theoretical results by empirically comparing our algorithm to the existing distributed boosting algorithm \cite{BalcanBFM12}. It does much better on the synthetic dataset and achieves promising results on real-world datasets as well.

\end{itemize}

\section{Problem Setup}
We first introduce agnostic learning as a special case of the general statistical learning problem. Then, we discuss the extension of the problem to the distributed setting, where the data is adversarially partitioned.  

\subsection{Statistical learning problem}
In statistical learning, we have access to a sampling oracle according to some probability distribution $D$ over $X\times \{-1,1\}$. The goal of a learning algorithm is to output a hypothesis $h$ with a low error rate with respect to $D$, defined as $err_D(h)=E_{(x,y)\sim D}(h(x)\neq y)$. Often, we compare the error rate to the minimum achievable value within a hypothesis set $H$, denoted by $err_D(H)=\inf_{h'\in H}err_D(h')$. More precisely, a common error bound is in the following form. 
\begin{equation}
\label{eq:agnostic}
err_D(h) \le c \cdot err_D(H) + \epsilon,
\end{equation}
for some constant $c\ge 1$ and an arbitrary error parameter $\epsilon>0$.

Many efficient learning algorithms have been proposed for the realizable case, where the target function is in $H$ and thus $err_D(H) = 0$.
In this paper, we consider the more general case where we do not have any assumption on the value of $err_D(H)$. This is often called the agnostic learning setting~\cite{kearns94} .
Ideally, we want $c$ in the bound to be as close to one as possible. However, for some hypothesis set $H$, achieving such a bound with $c=1$ is known to be NP-hard~\cite{Feldman09}.

\subsection{Extension to the distributed setting}
In this work, we consider the agnostic learning problem in the distributed learning framework proposed by \cite{BalcanBFM12}. In this framework, we have $k$ entities. Each entity $i\in[k]$ has access to a sampling oracle according to a distribution $D_i$ over $X\times \{-1,1\}$. There is also a \textit{center} which can communicate with the $k$ entities and acts as a coordinator.
The goal is to learn a good hypothesis with respect to the overall distribution $D=\frac{1}{k}\sum_{i=1}^k{D_i}$ without too much communication among entities.
It is convenient to calculate the communication by \textit{words}. For example, a $d$-dimensional vector counts as $O(d)$ words.\\

\noindent
\textbf{Main goal}. The problem we want to solve in this paper is to design an algorithm that achieves error bound~(\ref{eq:agnostic}) for a general concept class $H$. The communication complexity should depend only logarithmically on $1/\epsilon$.

\section{Distributed agnostic boosting}
In this work, we show a distributed boosting algorithm for any concept class with a finite VC-dimension $d$.
In the realizable PAC setting, the boosting algorithm is assumed to have access to a $\gamma$-weak learner that, under any distribution, finds a hypothesis with error rate at most $1/2 - \gamma$.
This assumption is unrealistic in the agnostic setting since even the best hypothesis in the hypothesis set can perform poorly.
Instead, following the setting of \cite{ben2001agnostic}, the boosting algorithm is assumed to have access to a $\beta$-weak agnostic learner defined as follows.
\begin{definition}
A $\beta$-weak agnostic learner, given any probability distribution $D$, will return a hypothesis $h$ with error rate
\[
err_{D}(h)\le err_{D}(H)+\beta.
\]
\end{definition}

Detailed discussion of the existence of such weak learners can be found in~\cite{ben2001agnostic}.
Since error of $1/2$ can be trivially achieved, in order for the weak learner to convey meaningful information, we assume $err_{D}(H) < 1/2 - \beta$.
Some prior works use different definitions. For example, \cite{kalai2008agnostic} uses the definition of $(\alpha, \gamma)$-weak learner.
That definition is stronger than ours, since an $(\alpha, \gamma)$-weak learner in that paper implies a $\beta$-weak learner in our paper with $\beta = \alpha - \gamma$. Therefore, our results still hold by using their definition. 
Below we show an efficient agnostic boosting algorithm in the centralized setting.

\subsection{Agnostic boosting: centralized version}
The main reason why many boosting algorithms (including AdaBoost \cite{FreundS97} and weight-based boosting~\cite{mehmohri,shai2}) fail in the agnostic setting is that they tend to update the example weights aggressively and may end up putting too much weight on noisy examples.

To overcome this, we consider a smoothed boosting algorithm~\cite{Kale07}, shown in Algorithm~\ref{alg:mwu}. This algorithm uses at most $O(\log 1/\epsilon)$ iterations and enjoys a nice ``smoothness'' property, which is shown to be helpful in the agnostic setting \cite{Gavinsky03}. The algorithm was originally analyzed in the realizable case 
but has later been noted to be able to work in the agnostic setting~\cite{Feldman10}.
Below, for completeness we show the analyses of the algorithm in both the realizable and agnostic settings. 


The boosting algorithm adjusts the example weights using the standard multiplicative weight update rule. The main difference is that it performs an additional Bregman projection step of the current example weight distribution into a convex set $\mathcal{P}$ after each boosting iteration. The Bregman projection is a general projection technique that finds a point in the feasible set with the smallest ``distance'' to the original point in terms of \textit{Bregman divergence}.
Here we use a particular Bregman divergence called \textit{relative entropy} $RE(p \parallel q)=\sum_i{p_i\ln(p_i/q_i)}$ for two distributions $p$ and $q$.
To ensure that the boosting algorithm always generates a ``smooth'' distribution, we set the feasible set $\mathcal{P}$ to be the set of all $\epsilon$-smooth distributions, which is defined as follows.

\begin{definition}
A distribution $D$ on $S$ is called $\epsilon$-smooth if $\max_i D(i)\le \frac{1}{\epsilon |S|}$,
\end{definition}

It is easy to verify that $\mathcal{P}$ is a convex set. The complete boosting algorithm is shown in Algorithm~\ref{alg:mwu} and the theoretical guarantee in Theorem~\ref{thm:smooth-realizable}. The proof, included in the appendix, is similar to the one in \cite{Kale07}, except that they use real-valued weak learners, whereas here we only consider binary hypotheses for simplicity.

\begin{algorithm}[t]
   \caption{Centralized Smooth Boosting algorithm \cite{Kale07}}
   \label{alg:mwu}
\begin{algorithmic}
   \STATE {\bf Initialization:} Fix a $\gamma\le\frac{1}{2}$. Let $D^{(1)}$ to be the uniform distribution over the dataset $S$.\\[0.5em]
   \FOR{$t=1,2,\dots,T$}
   \vskip2mm
   \STATE \begin{enumerate}
   \item Call the weak learner with distribution $D^{(t)}$ and obtain a hypothesis $h^{(t)}$\\[0.5em]
   \item Update the example weights
   \[
   \hat{D}^{(t+1)}(i) = D^{(t)}(i) \cdot (1-\gamma)^{\ell^{(t)}_i}/Z^{(t)}
   \]
   where $\ell^{(t)}_i = \mathbf{1}[h^{(t)} (x_i)=y_i]$ and $Z^{(t)}=\sum_i D^{(t)}(i) \cdot (1-\gamma)^{\ell^{(t)}_i}$ is the normalization factor.\\[0.5em]
   \item Project $\hat{D}^{(t+1)}$ into the feasible set $\mathcal{P}$ of $\epsilon$-smooth distributions
   \[
   D^{(t+1)} = \underset{D\in\mathcal{P}}{\arg\min}\; {RE(D\parallel \hat{D}^{(t+1)})}
   \]
   \end{enumerate}
   \ENDFOR
   \vskip0.5em
   \STATE {\bf Output:} The hypothesis $\mathrm{sign} \left(\frac{1}{T}\sum_{t=1}^T h^{(t)}\right)$
\end{algorithmic}
\end{algorithm}

\begin{theorem} \label{thm:smooth-realizable}
Given a sample $S$ and access to a $\gamma$-weak learner, Algorithm~\ref{alg:mwu} makes at most $T=O(\frac{\log(1/\epsilon)}{\gamma^2})$ calls to the weak learner with $\epsilon$-smooth distributions and achieves error rate $\epsilon$ on $S$.
\end{theorem}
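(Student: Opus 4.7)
The plan is to argue by contradiction: suppose after $T$ rounds the majority-vote hypothesis still errs on more than an $\epsilon$-fraction of $S$, and then show this forces $T = O(\log(1/\epsilon)/\gamma^2)$. Concretely, let $E \subseteq S$ be the set of misclassified points, and let $U$ be the uniform distribution on $E$. Since $|E| > \epsilon |S|$, we have $U(i) = 1/|E| \le 1/(\epsilon|S|)$, so $U \in \mathcal{P}$. The reason $U$ is a useful comparator is that for every $i \in E$, at least half of the $h^{(t)}$'s misclassify $x_i$, i.e.\ $\sum_{t=1}^T \ell^{(t)}_i \le T/2$ with $\ell^{(t)}_i = \mathbf{1}[h^{(t)}(x_i)=y_i]$. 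Meanwhile, the smoothness of $D^{(t)}$ is immediate from the fact that Step 3 projects onto $\mathcal{P}$ at every round.

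Next I would track the potential $\Phi^{(t)} = RE(U \parallel D^{(t)})$ and bound its per-round change. First, the generalized Pythagorean theorem for Bregman projections gives $RE(U \parallel \hat{D}^{(t+1)}) \ge RE(U \parallel D^{(t+1)}) + RE(D^{(t+1)} \parallel \hat{D}^{(t+1)})$ for any $U \in \mathcal{P}$, so $\Phi^{(t+1)} \le RE(U \parallel \hat{D}^{(t+1)})$, which lets me absorb the projection step for free. Plugging in the explicit multiplicative update yields
\[
\Phi^{(t+1)} - \Phi^{(t)} \;\le\; -\ln(1-\gamma)\sum_i U(i)\,\ell^{(t)}_i \;+\; \ln Z^{(t)}.
\]
Because $\ell^{(t)}_i \in \{0,1\}$, $Z^{(t)} = 1 - \gamma \sum_i D^{(t)}(i)\ell^{(t)}_i \le 1 - \gamma(\tfrac12 + \gamma)$ by the weak-learner guarantee applied to $D^{(t)}$, so $\ln Z^{(t)} \le -\gamma(\tfrac12 + \gamma)$.

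Summing the one-step inequality over $t=1,\dots,T$ and using $\sum_{t}\sum_i U(i)\ell^{(t)}_i = \sum_{i\in E}U(i)\sum_t \ell^{(t)}_i \le T/2$, I obtain
\[
\Phi^{(T+1)} - \Phi^{(1)} \;\le\; -\tfrac{T}{2}\ln(1-\gamma) \;-\; T\gamma\bigl(\tfrac12 + \gamma\bigr).
\]
Expanding $-\ln(1-\gamma) \le \gamma + \gamma^2$ (valid for moderate $\gamma \le 1/2$), the right-hand side is at most $-T\gamma^2/2$, so $\Phi^{(T+1)} \le \Phi^{(1)} - T\gamma^2/2$. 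Since $D^{(1)}$ is uniform on $S$, $\Phi^{(1)} = \ln(|S|/|E|) \le \ln(1/\epsilon)$, while $\Phi^{(T+1)} \ge 0$. Hence $T \le (2/\gamma^2)\ln(1/\epsilon)$, contradicting the hypothesis as soon as $T$ exceeds this bound; equivalently, running the algorithm for $T = O(\log(1/\epsilon)/\gamma^2)$ rounds guarantees training error at most $\epsilon$.

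The main obstacle is the projection step: without the Pythagorean inequality one cannot chain potentials across rounds because the weight update is applied to $D^{(t)}$ (the projected distribution) rather than to $\hat{D}^{(t)}$. Getting the directions right — that the projection can only \emph{decrease} the divergence to any $U \in \mathcal{P}$ — is the key fact that allows the classical multiplicative-weights analysis to go through even with the extra smoothing step, and it is also what forces the comparator $U$ to lie in $\mathcal{P}$, which is why the contradiction must be set up using an $\epsilon$-smooth $U$ concentrated on the error set.
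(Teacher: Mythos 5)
Your proof is correct and reaches the same conclusion with the same constant ($T \le 2\ln(1/\epsilon)/\gamma^2$), but it takes a somewhat more self-contained route than the paper. The outer skeleton is identical: argue by contradiction, take $U$ uniform on the error set $E$, note $|E|\ge\epsilon|S|$ puts $U$ in $\mathcal{P}$, use that the majority vote failing on $E$ gives $\sum_t\sum_i U(i)\ell^{(t)}_i\le T/2$, and use $RE(U\parallel D^{(1)})\le\ln(1/\epsilon)$. The difference is in the middle: the paper simply invokes the Herbster--Warmuth regret bound for the projected multiplicative-weights update (its Lemma~\ref{lem:mwu}) as a black box, whereas you re-derive that machinery inline by tracking the potential $\Phi^{(t)}=RE(U\parallel D^{(t)})$, absorbing the projection step via the generalized Pythagorean inequality $RE(U\parallel\hat{D}^{(t+1)})\ge RE(U\parallel D^{(t+1)})$ for convex $\mathcal{P}$, and bounding $Z^{(t)}=1-\gamma\sum_i D^{(t)}(i)\ell^{(t)}_i\le 1-\gamma(\tfrac12+\gamma)$ together with $-\ln(1-\gamma)\le\gamma+\gamma^2$ for $\gamma\le\tfrac12$ (both estimates check out). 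What each buys: the paper's version is shorter and leans on a known lemma; yours makes explicit exactly where the weak-learning guarantee and the projection enter, and in particular why the extra Bregman projection cannot hurt the analysis --- which is the one point the paper leaves implicit inside the cited lemma. No gaps.
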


Note that in Theorem~\ref{thm:smooth-realizable}, it is not explicitly assumed to be in the realizable case. In other words, If we have a $\gamma$-weak learner in the agnostic setting, we can achieve the same guarantee. However, in the agnostic setting, we only have access to a $\beta$-weak agnostic learner, which is a much weaker and more realistic assumption. The next theorem shows the error bound we get under this usual assumption in the agnostic setting.

\begin{theorem}\label{thm:smooth-agnostic}
Given a sample $S$ and access to a $\beta$-weak agnostic learner, Algorithm~\ref{alg:mwu} uses at most $O(\frac{\log(1/\epsilon)}{(1/2-\beta)^2})$ iterations and achieves an error rate $\frac{2err_{S}(H)}{1/2-\beta} + \epsilon$ on $S$, where $err_{S}(H)$ is the optimal error rate on $S$ achievable using the hypothesis class $H$.
\end{theorem}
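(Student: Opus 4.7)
The plan is to reduce Theorem~\ref{thm:smooth-agnostic} to the realizable Theorem~\ref{thm:smooth-realizable} by exploiting the smoothness of the boosting distributions to turn the $\beta$-weak agnostic learner into an honest $\gamma$-weak learner on every distribution the algorithm actually queries. The one-line observation driving everything is: if $D$ is an $\eta$-smooth distribution on $S$ and $h^{\star}\in\arg\min_{h\in H}err_S(h)$, then the $err_S(H)\cdot|S|$ points on which $h^{\star}$ errs each carry mass at most $1/(\eta|S|)$, so
\[
err_D(H) \;\le\; err_D(h^{\star}) \;\le\; \frac{err_S(H)}{\eta}.
\]

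I will run Algorithm~\ref{alg:mwu} with its smoothness parameter set to $\eta := \frac{2\,err_S(H)}{1/2-\beta}+\epsilon$ (rather than $\epsilon$ itself) and with weak-learning edge $\gamma := (1/2-\beta)/2$. Applying the inequality above to each queried distribution $D^{(t)}$ gives $err_{D^{(t)}}(H) \le err_S(H)/\eta \le (1/2-\beta)/2$, so the hypothesis $h^{(t)}$ returned by the $\beta$-weak agnostic learner obeys
\[
err_{D^{(t)}}(h^{(t)}) \;\le\; err_{D^{(t)}}(H) + \beta \;\le\; \frac{1/2-\beta}{2} + \beta \;=\; \frac{1}{2} - \gamma.
\]
Hence, on every distribution the booster actually feeds it, the $\beta$-weak agnostic learner is indistinguishable from a genuine $\gamma$-weak learner.

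At this point Theorem~\ref{thm:smooth-realizable} applies as a black box with this $\gamma$ and smoothness $\eta$: it gives termination in at most $T = O(\log(1/\eta)/\gamma^2) = O(\log(1/\epsilon)/(1/2-\beta)^2)$ rounds (absorbing $\log(1/\eta)$ into $\log(1/\epsilon)$ since $\eta \ge \epsilon$) and output error at most $\eta = \frac{2\,err_S(H)}{1/2-\beta}+\epsilon$ on $S$, which is exactly the claimed bound.

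The only real subtlety, and what I expect to be the main conceptual obstacle, is the calibration of $\eta$. The naive choice of setting the smoothness parameter equal to the target error $\epsilon$ fails because $err_S(H)/\epsilon$ can exceed $1/2-\beta$, wiping out the weak learner's edge. Choosing $\eta = 2\,err_S(H)/(1/2-\beta) + \epsilon$ is exactly large enough that $err_S(H)/\eta$ eats at most half of the available $1/2-\beta$ edge, preserving a constant edge $(1/2-\beta)/2$ so that the multiplicative-weights analysis of Theorem~\ref{thm:smooth-realizable} goes through verbatim.
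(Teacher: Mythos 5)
Your proposal is correct and follows essentially the same route as the paper's proof: use the $\epsilon'$-smoothness of every queried distribution to bound $err_{D^{(t)}}(H)\le err_S(H)/\epsilon'$, conclude the $\beta$-weak agnostic learner behaves as a $\gamma$-weak learner with $\gamma=\frac{1}{2}(1/2-\beta)$ once the smoothness parameter is at least $\frac{2\,err_S(H)}{1/2-\beta}$, and then invoke Theorem~\ref{thm:smooth-realizable} as a black box. Your explicit calibration $\eta=\frac{2\,err_S(H)}{1/2-\beta}+\epsilon$ matches the paper's choice of $\epsilon'$ and yields the same iteration count and error bound.
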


\begin{proof}
The idea is to show that as long as the boosting algorithm always generates some $\epsilon'$-smooth distributions, the $\beta$-weak agnostic learner is actually a $\gamma$-weak learner for some $\gamma>0$, i.e., it achieves error rate $1/2-\gamma$ for any $\epsilon'$-smooth distributions.
In each iteration $t$, the $\beta$-weak agnostic learner, given $S$ with distribution $D^{(t)}$, returns a hypothesis $h^{(t)}$ such that

\begin{align*}
err_{D^{(t)}}(h^{(t)})& \le err_{D^{(t)}}(H) + \beta\\
                  & \le \frac{1}{\epsilon'} err_{S}(H) + \beta.
\end{align*}

The second inequality utilizes the $\epsilon'$-smoothness property of $D^{(t)}$. The reason is that if $h$ is the optimal hypothesis on $S$, we have
\[
err_{D^{(t)}}(H) \le err_{D^{(t)}}(h) \le \frac{\text{\#mistakes on }S}{\epsilon' |S|} = \frac{1}{\epsilon'} err_{S}(h) = \frac{1}{\epsilon'} err_{S}(H).
\]

Let $\frac{1}{\epsilon'} err_{S}(H) + \beta = \frac{1}{2}-\gamma$, or equivalently $\gamma = (\frac{1}{2} - \beta) - \frac{1}{\epsilon'} err_{S}(H)$. Then, if $\epsilon' \ge \frac{2err_{S}(H)}{1/2-\beta}$, we have $\gamma\ge\frac{1}{2}(1/2-\beta)>0$. Therefore, we can use Theorem~\ref{thm:smooth-realizable}, and achieves error rate $\epsilon'$ on $S$ by using $O(\frac{\log(1/\epsilon')}{(1/2-\beta)^2})$ iterations. Alternatively, it achieves error rate $\frac{2err_{S}(H)}{1/2-\beta}+\epsilon$ by using $O(\frac{\log(1/\epsilon)}{(1/2-\beta)^2})$ iterations.
\end{proof}

Next, we show how to adapt this algorithm to the distributed setting.

\subsection{Agnostic boosting: distributed version}
The technique of adapting a boosting algorithm to the distributed setting is inspired by \cite{BalcanBFM12}. They claim that any weight-based boosting algorithm can be turned into a distributed boosting algorithm with communication complexity that depends linearly on the number of iterations in the original boosting algorithm. However, their result is not directly applicable to our boosting algorithm due to the additional projection step.
We will describe our distributed boosting algorithm by showing how to simulate the three steps in each iteration of Algorithm~\ref{alg:mwu} in the distributed setting with $O(d)$ words of communication. Then, since there are at most $O(\log(1/\epsilon))$ iterations, the desired result follows.

In step 1, in order to obtain a $2\beta$-weak hypothesis (we use $2\beta$ instead of $\beta$ for convenience, which only affects the constant terms), the center calls the $\beta$-weak agnostic learner on a dataset sampled from $D^{(t)} = \frac{1}{k}\sum_{i=1}^k{D^{(t)}_i}$. The sampling procedure is as follows. 
Each entity first sends its sum of weights to the center. Then, the center samples $O(\frac{d}{\beta^2}\log\frac{1}{\beta})$ examples in total across the $k$ entities proportional to their sum of weights.
By the standard VC-theory, the error rate of any hypothesis on the the sample is within $\beta$ to the true error rate with respect to the underlying distribution, with high probability. It is thus sufficient to find a hypothesis with error within $\beta$ to the best hypothesis, which can be done thanks to the assumed $\beta$-weak learner. 

Step 2 is relatively straightforward. The center broadcasts $h^{(t)}$ and each entity updates its own internal weights independently. Each entity then sends the summation of internal weights to the center for the calculation of the normalization factor. The communication in this step is $O(kd)$ for sending $h^{(t)}$ and some numbers.
What is left is to show that the projection in step 3 can be done in a communication efficient way.
As shown in \cite{Herbster01}, the projection using relative entropy as the distance into $\mathcal{P}$, the set of all $\epsilon$-smooth distributions, can be done by the following simple algorithm. 

For a fixed index $m$, we first clip the largest $m$ coordinates of $p$ to $\frac{1}{\epsilon n}$, and then rescale the rest of the coordinates to sum up to $1-\frac{m}{\epsilon n}$. We find the least index $m$ such that the resulting distribution is in $\mathcal{P}$, i.e. all the coordinates are at most $\frac{1}{\epsilon n}$. A naive algorithm by first sorting the coordinates takes $O(n\log n )$ time, but it is communicationally inefficient.

Fortunately, \cite{Herbster01} also proposes a more advanced algorithm by recursively finding the median. The idea is to use the median as the threshold, which corresponds to a potential index $m$, i.e., $m$ is the number of coordinates larger than the median. We then use a binary search to find the least index $m$. The distributed version of the algorithm is shown in Algorithm~\ref{alg:project}.

\begin{algorithm}[!ht]
   \caption{Distributed Bregman projection algorithm}
   \label{alg:project}
\begin{algorithmic}
\vskip0.2em

   \STATE {\bf Input:}\\[1mm]
   \quad \textbf{Each entity $i$}:\\[1mm]
   \qquad a disjoint subset $\mathcal{W}_i$ of $\mathcal{W}=\{w_1,\dots,w_n\}$\\[1mm]
   \quad \textbf{Center}:\\[1mm]
   \qquad $n_0 = n$;\quad $C=0$;\quad $C^w=0$

   \vskip0.4em

   \WHILE{$n_0 \neq 0$}
   \vskip1mm
   \STATE distributedly find the median $\theta$ of $(\mathcal{W}_1,\dots,\mathcal{W}_k)$\\[2mm]

   \STATE  \textbf{Each entity $i$}: \\[0.1cm]
   \STATE \quad $\mathcal{L}_i=\{w: w < \theta, w\in\mathcal{W}_i\}$;\hskip0.9em  $\mathcal{M}_i=\{w: w = \theta, w\in\mathcal{W}_i\}$;\hskip0.9em $\mathcal{H}_i=\{w: w > \theta, w\in\mathcal{W}_i\}$\\[2mm]
   \STATE \quad $L^w_i = \sum_{w\in \mathcal{L}_i} w$;\hskip5.5em  $M^w_i= \sum_{w\in \mathcal{M}_i} w$;\hskip5.18em $H^w_i=\sum_{w\in \mathcal{H}_i} w$\\[2mm]
   \STATE  \textbf{Center}: \\[0.1cm]
   \quad $L = \sum_i |\mathcal{L}_i|$; \hskip2em $M = \sum_i |\mathcal{M}_i|$; \hskip2em $H = \sum_i |\mathcal{H}_i|$\\[2mm]
   \quad $L^w = \sum_i L^w_i$; \hskip1.57em $M^w = \sum_i M^w_i$; \hskip1.7em $H^w = \sum_i H^w_i$ \\[2mm]

   \STATE \quad $m_0=\frac{1-(C+H)\frac{1}{\epsilon n}}{1-(C^w+H^w)}$ and broadcasts it\\[2mm]
   \STATE \quad \textbf{if} $\theta m_0>\frac{1}{\epsilon n}$ \textbf{then}\\[1mm]
   \STATE \qquad $C = C + H + M$; \quad $C^w = C^w + H^w + M^w$\\[1mm]
   \STATE \qquad \textbf{if} $L=0$ \textbf{then} $\theta = \max(w: w<\theta, \; w \in \mathcal{W})$\\[1mm]

   \STATE \qquad set $n_0=L$ and notify each entity $i$ to set $\mathcal{W}_i = \mathcal{L}_i$\\[1mm]
   \STATE \quad \textbf{else}\\[1mm]
   \STATE \qquad set $n_0=H$ and notify each entity $i$ to set $\mathcal{W}_i = \mathcal{H}_i$\\[1mm]
   \ENDWHILE\\[2mm]

   \STATE  \textbf{Center}: \quad $m_0=\frac{1-C\frac{1}{\epsilon n}}{1-C^w}$ and broadcasts it\\[2mm]
   \STATE  \textbf{Each entity $i$}: \quad set each coordinate as
   $w'_i = \left\{ \begin{array}{ll} \frac{1}{\epsilon n}  & \,\text{if  } w_i > \theta \\
                                    w_i m_0              & \,\text{if } w_i \le \theta
   \end{array} \right.$

\end{algorithmic}
\end{algorithm}

\begin{algorithm}[!ht]
   \caption{Distributedly finding the median}
   \label{alg:median}
\begin{algorithmic}
\vskip0.2em

   \STATE {\bf Input:}\\[1.5mm]
   \quad \textbf{Each entity $i$}: a disjoint subset $\mathcal{W}_i$ of $\mathcal{W}=\{w_1,\dots,w_n\}$\\[2mm]
   \STATE  \textbf{Each entity $i$}: Send the median $m_i$ of $\mathcal{W}_i$ to the center \\[1.5mm]

   \WHILE {$|\mathcal{W}_i|>1$ for some $i\in[k]$}
   \vskip1.5mm
   \STATE  \textbf{Center}: \\[1.5mm]
   \quad Find the maximum and minimum of the $k$ medians, denoted by $\overline{m}$ and $\underline{m}$ and \\
    \quad notify the corresponding entities, denoted by $A$ and $B$.\\[1mm]
   
   
   \STATE  \textbf{Entity $\;A\;$}: Send $\; \overline{n}=|i: w_i\in \mathcal{W}_A, \; w_i \ge \overline{m}|\;$ to the center\\[1mm]
   \STATE  \textbf{Entity $\;B\;$}: Send $\; \underline{n}=|i: w_i\in \mathcal{W}_B, \; w_i < \underline{m}|\;$ to the center\\[1.5mm]

   \STATE  \textbf{Center}: Send $r=\min\{\overline{n}, \underline{n}\}$ to entity $A$ and $B$\\[1.5mm]
   \STATE \textbf{Entity $A$}: Remove the largest $r$ elements in $\mathcal{W}_A$\\[1.5mm]
   \STATE \textbf{Entity $B$}: Remove the smallest $r$ elements in $\mathcal{W}_B$\\[1.5mm]
   \STATE  \textbf{Entity $A$ and $B$}: Send the new median to the center\\[1.5mm]
   \ENDWHILE
\end{algorithmic}
\end{algorithm}

\begin{theorem}\label{thm:project}
Algorithm~\ref{alg:project} projects a $n$-dimensional distribution into the set of all $\epsilon$-smooth distributions $\mathcal{P}$ with $O(k\log^2(n))$ words of total communication complexity.
\end{theorem}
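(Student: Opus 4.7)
The plan is to bound communication in two nested parts: the outer loop of Algorithm~\ref{alg:project}, which is a distributed version of the centralized recursive median procedure of~\cite{Herbster01} for projection in relative entropy, and Algorithm~\ref{alg:median}, which implements distributed median-finding. First I would verify that the two algorithms correctly simulate the centralized projection: the center's state $(C, C^w)$ accumulates the number of coordinates already committed to be clipped at $\tfrac{1}{\epsilon n}$ and their total unclipped mass, while $(H, H^w)$ and $(M, M^w)$ record the corresponding quantities for coordinates above and equal to the current median $\theta$. The candidate rescaling factor $m_0 = \frac{1 - (C+H)/(\epsilon n)}{1 - (C^w + H^w)}$ is exactly the multiplier that would make the uncommitted coordinates sum to $1 - (C+H)/(\epsilon n)$, and the test $\theta m_0 > 1/(\epsilon n)$ asks whether the current ``high'' block would itself overflow after rescaling. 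This matches the centralized recursion, so output correctness reduces to the correctness of the two subroutines.

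For the outer loop, each iteration splits the active set $\mathcal{W}$ at its median, so $n_0$ at least halves every iteration and the loop terminates after $O(\log n)$ rounds. Within one iteration, each entity sends $O(1)$ numbers (three local counts and three local mass sums) to the center and receives $O(1)$ words (the value of $m_0$ together with a left/right notification), for a total of $O(k)$ words exclusive of the median call.

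For the median subroutine, each round of Algorithm~\ref{alg:median} preserves the global median, because the $r$ largest elements of $\mathcal{W}_A$ are all at least $\overline{m}\ge\underline{m}$ and are paired off with the $r$ smallest elements of $\mathcal{W}_B$, which are all strictly below $\underline{m}$, so each pair flanks the global median. Per round the communication is only the updated medians from entities $A$ and $B$, the counts $\overline{n}$ and $\underline{n}$, and the value $r$, so each round costs $O(1)$ words on top of an initial $O(k)$ startup cost of collecting the first local medians. The technical crux, and the main obstacle, is bounding the number of rounds: $r$ can be as small as $1$, so a naive argument would give $\Omega(n)$ iterations. The fix is that whenever $r$ is small, one of the two extremal entities must itself be nearly empty and drop out of consideration soon; a potential such as $\sum_i \log|\mathcal{W}_i|$ then decreases by $\Omega(1)$ per round, yielding $O(k + \log n)$ rounds and hence $O(k + \log n)$ words per call to Algorithm~\ref{alg:median}.

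Combining these bounds, the total communication is $O(\log n)$ outer iterations times $O(k + \log n)$ words per iteration, which is $O(k \log n + \log^2 n) = O(k \log^2 n)$ as claimed. The only nontrivial piece is the potential-function analysis of the distributed median; everything else is bookkeeping of $O(1)$-word messages summed over $k$ entities and $O(\log n)$ rounds.
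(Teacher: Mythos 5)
Your decomposition is the same as the paper's: correctness is inherited from the centralized projection of Herbster--Warmuth, the outer binary search over thresholds takes $O(\log n)$ rounds at $O(k)$ words each, and the distributed median routine communicates $O(1)$ words per round after an initial $O(k)$ cost, with correctness following because the paired removals above $\overline{m}$ and below $\underline{m}$ straddle the global median. The one place you diverge is the round count for Algorithm~\ref{alg:median}, and there your intermediate claim does not follow from your own argument. The potential $\sum_i \log|\mathcal{W}_i|$ starts at $\Theta(k\log(n/k))$, not $O(k+\log n)$, so a decrease of $\Omega(1)$ per round yields only $O(k\log n)$ rounds --- which is exactly the bound the paper proves (``at least one entity halves each round, so $O(k\log n)$ iterations''), not the $O(k+\log n)$ you assert; as a worst-case statement the sharper claim is unsupported, since adversarial partitions can force essentially every entity to be halved $\Theta(\log(n/k))$ times. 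Relatedly, your worry that ``$r$ can be as small as $1$'' and that a naive argument gives $\Omega(n)$ rounds is misplaced: $r=\min\{\overline{n},\underline{n}\}$ is always about half the size of the smaller of the two extremal active sets, which is precisely why one entity halves per round --- the paper's one-line argument, and the same fact your potential decrease relies on. None of this damages the theorem: with the corrected count, each median call costs $O(k+k\log n)=O(k\log n)$ words, and $O(\log n)$ outer iterations give the claimed $O(k\log^2 n)$ total, matching the paper. So the proof is salvageable essentially as written; just replace the $O(k+\log n)$ round bound by $O(k\log n)$ and propagate it through the final sum.
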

\begin{proof}
Since Algorithm~\ref{alg:project} is a direct adaptation of the centralized projection algorithm in \cite{Herbster01}, we omit the proof of its correctness. Because we use a binary search over possible thresholds,  the algorithm runs at most $O(\log(n))$ iteration. Therefore, it suffices to show that the communication complexity of finding the median is at most $O(k\log n)$.
This can be done by the iterative procedure shown in Algorithm~\ref{alg:median}.
Each entity first sends its own median to the center. The center identifies the maximum and minimum local medians, denoted as $\overline{m}$ and $\underline{m}$, respectively. The global median must be between $\overline{m}$ and $\underline{m}$, and removing the same number of elements larger than or equal to $\overline{m}$ and less than $\underline{m}$ will not change the median. Therefore, the center can notify the two corresponding entities and let them remove the same number of elements. At least one entity will reduce its size by half, so the algorithm stops after $O(k\log n)$ iterations. Note that except for the first round, we only need to communicate the updated medians of two entities at each round, so the overall communication complexity is $O(k\log n)$ words.

In practice, it is often easier and more efficient to use a \textit{quickselect}-based distributed algorithm to find the median. The idea is to randomly select and broadcast a weight at each iteration. This, in expectation, can remove half of the possible median candidates. This approach achieves the same communication complexity in expectation.
\end{proof}

The complete distributed agnostic boosting algorithm is shown in Algorithm~\ref{alg:dist_agboost}.
We summarize our theoretical results in the next Theorem.

\begin{theorem}
Given access to a $\beta$-weak agnostic learner, Algorithm~\ref{alg:dist_agboost} achieves error rate $\frac{2err_D(H)}{1/2-\beta}+\epsilon$ by using at most $O(\frac{\log(1/\epsilon)}{(1/2-\beta)^2})$ rounds, each involving $O((d/\beta^2)\log(1/\beta))$ examples and an additional $O(kd\log^2(\frac{d\log(1/\epsilon)}{(1/2-\beta)\epsilon}))$ words of communication per round.
\end{theorem}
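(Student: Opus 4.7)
The plan is to argue that Algorithm~\ref{alg:dist_agboost} is a faithful distributed simulation of the centralized smooth boosting procedure (Algorithm~\ref{alg:mwu}) operating on a pooled sample $S$ drawn across the entities, then invoke Theorem~\ref{thm:smooth-agnostic} on $S$, and finally lift the sample guarantee to a distribution guarantee via VC-based uniform convergence. First I would have each entity maintain weights over a local sample of size $\tilde{\Theta}(dT/\epsilon^2)$, so that the implicit global distribution $D^{(t)} = \tfrac{1}{k}\sum_i D^{(t)}_i$ coincides with the distribution Algorithm~\ref{alg:mwu} would produce on $S$, and so that $D^{(t)}$ is $\epsilon$-smooth after projection. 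Second I would show that when the center invokes the $\beta$-weak agnostic learner on $O((d/\beta^2)\log(1/\beta))$ examples sampled proportionally to the entities' cumulative weights, standard VC uniform convergence on $H$ guarantees that empirical and $D^{(t)}$-errors of every hypothesis in $H$ agree up to $\beta$, so the returned hypothesis is a $2\beta$-weak agnostic learner with respect to $D^{(t)}$. Plugging this into Theorem~\ref{thm:smooth-agnostic} (the change from $\beta$ to $2\beta$ only alters constants absorbed into the stated bound) yields error $\tfrac{2\,err_S(H)}{1/2-\beta}+\epsilon$ on $S$ after $T=O(\log(1/\epsilon)/(1/2-\beta)^2)$ rounds.

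Next I would bound per-round communication by accounting for each of the three simulated steps. Step~1 requires $O(k)$ words for the entities to report their sums of weights plus $O((d/\beta^2)\log(1/\beta))$ examples transmitted to the center. Step~2 requires $O(kd)$ words to broadcast $h^{(t)}$ and to aggregate the normalization factor $Z^{(t)}$. Step~3 invokes Algorithm~\ref{alg:project}, which by Theorem~\ref{thm:project} uses $O(k\log^2 n)$ words where $n = \tilde{O}(dT/\epsilon^2)$ is the total number of weighted coordinates. Since $\log^2 n = O(\log^2(\tfrac{d\log(1/\epsilon)}{(1/2-\beta)\epsilon}))$, summing the three contributions gives the claimed $O(kd\log^2(\tfrac{d\log(1/\epsilon)}{(1/2-\beta)\epsilon}))$ bound per round.

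Finally I would convert the sample guarantee into a distribution guarantee. The output is the sign of a majority vote over $T$ hypotheses from $H$, so it lies in a class $\mathcal{F}_T$ of VC-dimension $O(Td)$. Choosing $|S|$ a sufficiently large polynomial in $d$, $T$ and $1/\epsilon$ ensures, uniformly over $\mathcal{F}_T$ and $H$, that empirical and population errors differ by at most a constant factor of $\epsilon$; combined with $err_S(H) \le err_D(H) + \epsilon$ from VC bounds on $H$, this converts the Theorem~\ref{thm:smooth-agnostic} conclusion into $err_D \le \tfrac{2\,err_D(H)}{1/2-\beta}+\epsilon$, after rescaling $\epsilon$ by a constant.

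The main obstacle is the two-level use of uniform convergence: the small per-iteration subsamples must simultaneously certify a $2\beta$-weak learner for \emph{every} intermediate weighted distribution $D^{(t)}$, while the initial sample at each entity must be large enough that the final $T$-way majority vote generalizes. Making these two sample-size requirements consistent with the stated examples-per-round and $\log^2(\cdot)$ communication bounds, while also verifying that $\epsilon$-smoothness of $D^{(t)}$ is preserved under both importance sampling (for the weak learner calls) and the Bregman projection, is the most delicate part; all other steps reduce to bookkeeping on top of Theorems~\ref{thm:smooth-agnostic} and~\ref{thm:project}.
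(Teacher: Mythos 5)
Your proposal follows essentially the same route as the paper's proof: view Algorithm~\ref{alg:dist_agboost} as an exact distributed simulation of Algorithm~\ref{alg:mwu} on the pooled sample $S$, apply Theorem~\ref{thm:smooth-agnostic} to get the empirical error bound, charge each round $O((d/\beta^2)\log(1/\beta))$ examples plus $O(kd)$ words for the hypothesis broadcast and $O(k\log^2 n)$ words for the projection via Theorem~\ref{thm:project}, and finish with a standard VC argument relating $err_S(H)$ to $err_D(H)$ and the empirical to the true error of the final majority vote. The extra details you make explicit (the per-round subsample yielding a $2\beta$-weak learner for $D^{(t)}$, and the $\tilde{O}(Td)$ complexity of the $T$-way vote dictating $|S|$) are exactly the points the paper handles in its preceding discussion and its choice of $n=\tilde{O}(\frac{\log(1/\epsilon)d}{(1/2-\beta)^2\epsilon^2})$, so the argument is correct and not materially different.
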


\begin{algorithm}[!ht]
   \caption{Distributed agnostic boosting algorithm}
   \label{alg:dist_agboost}
\begin{algorithmic}
\vskip0.2em

   \STATE {\bf Initialization:}\\[1mm]
   \STATE  \textbf{Center}: Access to a $\beta$-agnostic weak learner. Set $\gamma = \frac{1}{2}(\frac{1}{2}-\beta)$\\[1mm]
   \STATE  \textbf{Each entity $i$}:\\[1mm]
   \quad Sample $S_i$ drawn from $D_i$ such that $S=\cup_i S_i$ with size $n=\tilde{O}(\frac{\log(1/\epsilon)d}{(1/2-\beta)^2\epsilon^2}))$ \\[1mm]
   \quad Set weights $v^{(1)}_{i,x}=1/|S_i|$ for each $(x,y) \in S_i$\\[1mm]
   \FOR{$t=1,2,\dots,T$}
   \vskip1mm
\STATE  \textbf{Each entity $i$}: Send $w^{(t)}_i =\sum_{x\in S_i} v^{(t)}_{i,x}$ to the center\\[1mm]
\STATE  \textbf{Center}: Let $W^{(t)}=\sum_i w^{t}_i$. Determine the number of examples $n^{(t)}_i$ to request from each\\
\qquad entity $i$ by sampling $O(\frac{d}{\beta^2}\log\frac{1}{\beta})$ times from the multinomial distribution $w^{(t)}_i/W^{(t)}$,\\
\qquad  and then send each number $n^{(t)}_i$ to entity $i$.\\[1mm]
   \STATE  \textbf{Each entity $i$}: sample $n^{(t)}_i$ times from $S_i$ proportional to $v^{(t)}_{i,x}$ and send them to the center
   \STATE  \textbf{Center}: run the $\beta$-agnostic weak learner on the union of the received $O(\frac{d}{\beta^2}\log\frac{1}{\beta})$ examples,\\
    \qquad and then broadcast the returned hypothesis $h^{(t)}$
   \STATE  \textbf{Each entity $i$}: update the weight of each example $(x,y)$
   \[
   \hat{v}^{(t+1)}_{i,x} = \left\{ \begin{array}{ll} v^{(t)}_{i,x} (1-\gamma) & \,\text{if  } h^{(t)}(x)=y \\
     v^{(t)}_{i,x} & \text{otherwise}

   \end{array} \right.
   \]
   \vskip1mm
   Distributedly normalize and then project the weights by Algorithm~\ref{alg:project}\\[1mm]
   \ENDFOR
    \STATE {\bf Output:} The hypothesis $\mathrm{sign}\left(\frac{1}{T}\sum_{t=1}^T h^{(t)}\right)$
\end{algorithmic}
\end{algorithm}

\begin{proof}
The boosting algorithm starts by drawing from $D$ a sample $S$ of size $n=\tilde{O}(\frac{\log(1/\epsilon)d}{(1/2-\beta)^2\epsilon^2})$ across the $k$ entities without communicating them. If $S$ is a centralized dataset, then by Theorem~\ref{thm:smooth-agnostic} we know that Algorithm~\ref{alg:mwu} achieves error rate $\frac{2err_{S}(H)}{1/2-\beta} + \frac{\epsilon}{2}$ on $S$ using $O(\frac{\log(1/\epsilon)}{(1/2-\beta)^2})$ iterations.
We have shown that Algorithm~\ref{alg:dist_agboost} is a correct simulation of Algorithm~\ref{alg:mwu} in the distributed setting, and thus we achieve the same error bound on $S$. The number of communication rounds is the same as the number of iterations of the boosting algorithm. And in each round, the communication includes $O(d/\beta^2\log(1/\beta))$ examples for finding the $\beta$-weak hypothesis, $O(kd)$ words for broadcasting the hypothesis and some numbers, and $O(k\log^2 (n))$ words for the distributed Bregman projection.

So far we only have the error bound of $\frac{2err_S(H)}{1/2-\beta} + \frac{\epsilon}{2}$ on $S$.
To obtain the generalization error bound, note that with $n=\tilde{O}(\frac{\log(1/\epsilon)d}{(1/2-\beta)^2\epsilon^2})$ and by the standard VC-dimension argument,
we have that with high probability $err_S(H)\le err_D(H) + \frac{(1/2-\beta)\epsilon}{8}$, and the generalization error of our final hypothesis deviates from the empirical error by at most $\epsilon/4$, which completes the proof with the desired generalization error bound.
\end{proof}

\section{Experiments}
In this section, we compare the empirical performance of the proposed distributed boosting algorithms with two other algorithms on synthetic and real-world datasets.
The first one is distributed AdaBoost \cite{BalcanBFM12}, which is similar to our algorithm but without the projection step. The second one is the distributed logistic regression
algorithm available in the MPI implementation of the Liblinear package \cite{zhuang2014distributed}.
We choose it as a comparison to a non-boosting approach. Note that Liblinear is a highly-optimized package while our implementation is not, so the comparison in terms of speed is not absolutely fair. However, we show that our approach, grounded in a rigorous framework, is comparable to this leading method in practice. 

\subsection{Experiment setup}
All three algorithms are implemented in C using MPI, and all the experiments are run on Amazon EC2 with 16 m3.large machines.
The data is uniformly partitioned across 16 machines.
All the results are averaged over 10 independent trials. Logistic regression is a deterministic algorithm, so we do not show the standard deviation
of the error rate. We however still run it for 10 times to get the average running time.
Since each algorithm has different number of parameters, for fairness, we do not tune the parameters.
For the two boosting algorithms, we use $T=100$ decision stumps as our weak learners and set $\beta=0.2$ and $\epsilon=0.1$ in all experiments.
For logistic regression, we use the default parameter $C=1$.

\subsection{Synthetic dataset}
We use the synthetic dataset from \cite{long2008random}. 
This dataset has an interesting theoretical property that although it is linearly separable, by randomly flipping a tiny fraction of labels, all convex potential boosting algorithms, including AdaBoost, fail to learn well.
A random example is generated as follows. The label $y$ is randomly chosen from $\{-1, +1\}$ with equal odds.
The feature $x=\left< x_1, \dots, x_{21} \right>$, where $x_i \in \{-1, +1\}$, is sampled from a mixture distribution:
\textbf{1)} With probability $1/4$, set all $x_i$ to be equal to $y$. \textbf{2)} With probability $1/4$, set $x_1 = x_2 = \dots = x_{11} = y$ and $x_{12} = x_{13} = \dots = x_{21} = -y$.
\textbf{3)} With probability $1/2$, randomly set 5 coordinates from the first 11 and 6 coordinates from the last 10 to be equal to $y$. Set the remaining coordinates to $ -y$.

We generate 1,600,000 examples in total for training on 16 machines and test on a separate set of size 100,000.
The results are shown in Table~\ref{table:result-synthetic}.
One can see that our approach (Dist.SmoothBoost), is more resistant to noise than Dist.AdaBoost and significantly outperforms it for having upto $1\%$ noise. In high noise setting ($10\%$), Liblinear performs poorly, while our approach achieves the best error rate.

\begin{table}[t]
\caption{Average (over 10 trials) error rate (\%) and standard deviation on the synthetic dataset}
\label{table:result-synthetic}
\vskip 0.15in
\begin{center}
\begin{small}
\begin{tabular}{rrrr}
\toprule
\textbf{Noise}  & \textbf{Dist.AdaBoost}  &   \textbf{Dist.SmoothBoost} &  \textbf{Liblinear-LR} \\
\midrule
0.1\%   & 11.64 $\pm$ 3.82   &   4.28 $\pm$ 0.66  & 0.00 \\
1\%     &  25.97 $\pm$ 1.56    &   13.38 $\pm$ 4.66   & 0.00   \\
10\%    &  28.04 $\pm$ 0.94    &   27.07 $\pm$ 1.60   &  37.67 \\
\bottomrule
\end{tabular}
\end{small}
\end{center}
\vskip -0.2in
\end{table}

\begin{table}[t]
\caption{Average (over 10 trials) error rate (\%) and standard deviation on real-world datasets}
\label{table:result}
\vskip 0.15in
\begin{center}
\begin{small}
\begin{tabular}{lrr|rrr}
 \toprule
\textbf{Dataset} & \textbf{$\#$examples}  & \textbf{$\#$ features} & \textbf{Dist.AdaBoost}  &   \textbf{Dist.SmoothBoost} &  \textbf{Liblinear-LR} \\
\midrule
Adult  &  48,842  &  123  & 15.71 $\pm$ 0.16   &   \textbf{15.07} $\pm$ 2.32  & 15.36 \\
Ijcnn1 &   141,691   &  22  &  5.90 $\pm$ 0.10   &   \textbf{4.33} $\pm$ 0.18 & 7.57   \\
Cod-RNA &   488,565   &   8  &  \textbf{6.12} $\pm$ 0.09   &   6.51 $\pm$ 0.11  & 11.79  \\
Covtype &   581,012   &  54  &  24.98 $\pm$ 0.22   &    24.68 $\pm$ 0.30 & \textbf{24.52}   \\
Yahoo  &  3,251,378   & 10  &  37.08 $\pm$ 0.15  &   \textbf{36.86} $\pm$ 0.27 & 39.15   \\
\bottomrule

\end{tabular}
\end{small}
\end{center}
\vskip -0.1in
\end{table}

\subsection{Real-world datasets}
We run the experiments on 5 real-world datasets with sizes ranging from 50 thousands to over 3 millions: \textsc{Adult}, \textsc{Ijcnn1}, \textsc{Cod-RNA}, and \textsc{Covtype}
from the LibSVM data repository \footnote{\url{http://www.csie.ntu.edu.tw/~cjlin/libsvmtools/datasets}.}; \textsc{Yahoo} from the Yahoo! WebScope dataset \cite{chu2009case}. The Yahoo dataset is used for predicting whether a user will click the news article on their front page. It contains user click logs and is extremely imbalanced. We trim down this dataset so that the number of positive and negative examples are the same. The detailed information of the datasets are summarized in Table~\ref{table:result}. Each dataset is randomly split into 4/5 for the training set and 1/5 for the testing set. 

The average error rate and the total running time are summarized in Table~\ref{table:result} and Table~\ref{table:time}, respectively.
The bold entries indicates the best error rate. Our approach outperforms the other two on 3 datasets and performs competitively on the other 2 datasets.
In terms of running time, Liblinear is the fastest on all datasets.
However, the communication of our algorithm only depends on the dimension $d$, so even for the largest dataset (\textsc{Yahoo}), it can still finish within 4 seconds.
Therefore, our algorithm is suitable for many real-world situations where the number of examples is much larger than the dimension of the data.
Furthermore, our algorithm can be used with more advanced weak learners, such as distributed logistic regression, to further reduce the running time.

\begin{table}[!t]
\footnotesize
\caption{Average run time (sec) on real-world datasets}
\label{table:time}
\begin{center}
\begin{tabular}{lrrr}
\toprule
\textbf{Dataset} 
& \textbf{Dist.AdaBoost} 
& \textbf{Dist.SmoothBoost}
& \textbf{Liblinear-LR}
\\
\midrule
Adult    & 5.02   &  15.54  & 0.06 \\

Ijcnn1   & 0.76   &   9.19 &  0.10   \\

Cod-RNA  & 1.08   &   10.11  & 0.12  \\

Covtype  & 3.71   &   6.48 &  0.31   \\

Yahoo    &  3.37  &   3.79 & 1.37   \\
\bottomrule

\end{tabular}
\end{center}
\vskip -0.2in
\end{table}

\section{Conclusions}
We propose the first distributed boosting algorithm that enjoys strong performance guarantees, being simultaneously noise tolerant, communication efficient, and computationally efficient; furthermore, it is quite flexible in that it can used with a variety  of weak learners. This improves over the prior work of~\cite{BalcanBFM12,Daume12} that were either communication efficient only in noise-free scenarios or computationally prohibitive.
While enjoying nice theoretical guarantees, our algorithm also shows promising empirical results on large synthetic and real-world datasets.

Finally, we raise some related open questions. In this work we assumed a star topology, i.e., the center can communicate with all players directly. An interesting open question is to extend our results to general communication topologies. Another concrete open question is reducing the constant in our error bound  while maintaining good communication complexity. Finally, our approach uses centralized weak learners for learning general concept classes, so the computation is mostly done in the center. Are there efficient distributed weak learners for some specific concept classes? That could provide a more computation balanced distributed learning procedure that enjoys strong communication complexity as well.

\subsubsection*{Acknowledgments}
This work was supported in part by NSF grants  CCF-1101283, CCF-1451177, CCF-1422910, TWC-1526254, IIS-1217559, IIS-1563816, ONR grant N00014-09-1-0751, and AFOSR grant FA9550-09-1-0538.
We also thank Amazon's AWS in Education grant program for providing the Amazon Web Services.
We thank Vitaly Feldman for useful discussions and valuable comments.

\bibliographystyle{plain}
\bibliography{distboost}

\appendix
\section{Proof of Theorem~\ref{thm:smooth-realizable}}

{
\def\thetheorem{\ref{thm:smooth-realizable}}
\begin{theorem}
Given a sample $S$ and access to a $\gamma$-weak learner, Algorithm~\ref{alg:mwu} makes at most $T=O(\frac{\log(1/\epsilon)}{\gamma^2})$ calls to the weak learner with $\epsilon$-smooth distributions and achieves error rate $\epsilon$ on $S$.
\end{theorem}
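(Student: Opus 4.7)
The plan is a standard multiplicative-weights potential argument, tracking the relative entropy to a carefully chosen smooth comparator distribution, with the Bregman projection step absorbed cleanly by the generalized Pythagorean inequality. Suppose for contradiction that after $T$ rounds the majority-vote hypothesis has error rate greater than $\epsilon$ on $S$, and let $B \subseteq S$ be its set of mistakes, so $|B| > \epsilon n$ with $n = |S|$. Because $|B| > \epsilon n$, the uniform distribution $U_B$ on $B$ satisfies $\max_i U_B(i) = 1/|B| < 1/(\epsilon n)$, so $U_B \in \mathcal{P}$. Moreover, every $i \in B$ is misclassified by the majority vote, so at most $T/2$ of the $h^{(t)}$ label it correctly, which gives $\sum_t \langle U_B, \ell^{(t)} \rangle \leq T/2$.

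Next, I would track the potential $\Phi^{(t)} = RE(U_B \parallel D^{(t)})$. Using $(1-\gamma)^\ell = 1 - \gamma\ell$ for $\ell \in \{0,1\}$ and $\ln(1-x) \leq -x$ to bound $\ln Z^{(t)}$, together with $-\ln(1-\gamma) \leq \gamma + \gamma^2$ for $\gamma \leq 1/2$, a direct computation yields
$$RE(U_B \parallel \hat{D}^{(t+1)}) \;\leq\; \Phi^{(t)} \;-\; \gamma \langle D^{(t)}, \ell^{(t)} \rangle \;+\; (\gamma + \gamma^2) \langle U_B, \ell^{(t)} \rangle.$$
The projection is then handled by the generalized Pythagorean inequality for the relative-entropy Bregman divergence: since $D^{(t+1)}$ is the $RE$-projection of $\hat{D}^{(t+1)}$ onto the convex set $\mathcal{P}$ and $U_B \in \mathcal{P}$, we get $\Phi^{(t+1)} \leq RE(U_B \parallel \hat{D}^{(t+1)})$, so projecting only ever decreases the potential against any smooth comparator.

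Summing over $t = 1, \dots, T$ and using $\Phi^{(T+1)} \geq 0$ and $\Phi^{(1)} = RE(U_B \parallel \mathrm{unif}(S)) = \ln(n/|B|) \leq \ln(1/\epsilon)$, and invoking the weak-learning guarantee $\langle D^{(t)}, \ell^{(t)} \rangle \geq 1/2 + \gamma$ (which is legitimate at every round because each $D^{(t)} \in \mathcal{P}$ is $\epsilon$-smooth, so the weak learner can indeed be invoked), the linear-in-$\gamma$ terms on the two sides cancel and one is left with roughly $\gamma^2 T / 2 \leq \ln(1/\epsilon)$. Rearranging gives $T \leq 2\ln(1/\epsilon)/\gamma^2$, contradicting the initial assumption once $T$ exceeds this bound. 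The main obstacle is keeping the $O(\gamma)$ bookkeeping tight enough for those linear terms to cancel; the projection itself contributes nothing painful, but one has to choose the comparator $q = U_B$ inside $\mathcal{P}$ so that the Pythagorean inequality applies, which is precisely why the algorithm enforces smoothness in the first place.
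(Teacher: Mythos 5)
Your proof is correct and follows essentially the same route as the paper: a contradiction argument with comparator $U_B$ (uniform on the mistake set, which lies in $\mathcal{P}$ because $|B|\ge\epsilon n$), the weak-learning lower bound $\langle D^{(t)},\ell^{(t)}\rangle\ge 1/2+\gamma$, the majority-vote upper bound $\sum_t\langle U_B,\ell^{(t)}\rangle\le T/2$, and $RE(U_B\parallel D^{(1)})\le\ln(1/\epsilon)$, yielding $T\le 2\ln(1/\epsilon)/\gamma^2$. The only difference is that the paper invokes the multiplicative-weights-with-projection regret bound of Herbster--Warmuth as a black-box lemma, whereas you reprove that bound inline via the relative-entropy potential and the generalized Pythagorean inequality -- which is exactly how that lemma is established, so your write-up is simply a self-contained version of the same argument.
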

\addtocounter{theorem}{-1}
}

\begin{proof}
The analysis is based on the well-studied online learning from experts problem.
In each round $t$, the learner has to make a decision based on the advice of $n$ experts. More precisely, the learner chooses a distribution $D^{(t)}$ from a convex feasible set $\mathcal{P}$ and follows the advice of the $i$-th expert with probability $D^{(t)}(i)$. Then, the losses of each expert's suggested actions are revealed as a vector $\ell^{(t)}$. The expected loss of the learner incurred by using $D^{(t)}$ is thus $\ell^{(t)}D^{(t)}$. The goal is to achieve a total expected loss $\sum_{t=1}^T{\ell^{(t)}D^{(t)}}$ not too much more than $\min_{D\in\mathcal{P}}{\sum_{t=1}^T{\ell^{(t)}D}}$, the cost of always using the best fixed distribution in $\mathcal{P}$.
Step 2 and 3 of Algorithm~\ref{alg:mwu}, which is also known as the multiplicative weights update algorithm, has the following regret bound~\cite{Herbster01}.

\begin{lemma}\label{lem:mwu}
For any $0<\gamma\le 1/2$ and any positive integer $T$, the multiplicative weights update algorithm generates distributions $D^{(1)},\dots,D^{(T)}\in\mathcal{P}$ where each $D^{(t)}$ is computed only based on $\ell^{(1)},\dots,\ell^{(t-1)}$, such that for any $D\in\mathcal{P}$,
\[
\sum_{t=1}^T{\ell^{(t)}D^{(t)}} \le (1+\gamma)\sum_{t=1}^T{\ell^{(t)}D} + \frac{RE(D \parallel D^{(1)})}{\gamma},
\]
where, for two distributions $p$ and $q$, the relative entropy $RE(p \parallel q)=\sum_i{p_i\ln(p_i/q_i)}$.
\end{lemma}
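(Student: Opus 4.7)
The plan is to prove this standard regret bound via a potential-function argument that tracks $\Phi_t := RE(D\parallel D^{(t)})$ for an arbitrary fixed comparator $D \in \mathcal P$. Because each round of Algorithm~\ref{alg:mwu} consists of two sub-updates (the multiplicative weight update to $\hat D^{(t+1)}$, followed by the KL projection onto $\mathcal P$ yielding $D^{(t+1)}$), I would bound the one-step change $\Phi_{t+1}-\Phi_t$ by handling these two sub-steps separately and then telescope across $t=1,\dots,T$.

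For the projection step, the convexity of $\mathcal P$ together with the fact that KL is a Bregman divergence with respect to the (strictly convex) negative-entropy function yields the generalized Pythagorean inequality
\[
RE(D\parallel D^{(t+1)}) \le RE(D\parallel \hat D^{(t+1)}),
\]
so the projection can only pull the iterate closer to the comparator $D$, and it suffices to analyze how $RE(D\parallel \hat D^{(t+1)})$ compares to $RE(D\parallel D^{(t)})$.

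For the MWU sub-step I would expand $RE(D\parallel \hat D^{(t+1)}) - RE(D\parallel D^{(t)})$ directly using the closed-form update in Step 2; the $D(i)\ln D(i)$ pieces cancel and the $D(i)\ln D^{(t)}(i)$ pieces telescope, leaving only $\ln Z^{(t)} + \bigl(\ell^{(t)}\cdot D\bigr)\ln\tfrac{1}{1-\gamma}$. I would then bound $Z^{(t)} \le 1-\gamma(\ell^{(t)}\cdot D^{(t)})$ using $(1-\gamma)^{\ell}\le 1-\gamma\ell$ for $\ell\in[0,1]$ (with equality when $\ell\in\{0,1\}$, the case actually used in Algorithm~\ref{alg:mwu}), and then apply $\ln(1+z)\le z$. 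Combining with the projection inequality gives the per-round bound
\[
RE(D\parallel D^{(t+1)}) - RE(D\parallel D^{(t)}) \le -\gamma\,\ell^{(t)}\cdot D^{(t)} + \ln\tfrac{1}{1-\gamma}\,\ell^{(t)}\cdot D.
\]
Telescoping over $t$ and using $RE(D\parallel D^{(T+1)})\ge 0$ yields
\[
\gamma \sum_t \ell^{(t)}\cdot D^{(t)} \le \ln\tfrac{1}{1-\gamma}\sum_t \ell^{(t)}\cdot D + RE(D\parallel D^{(1)}).
\]
A final Taylor-series estimate $\ln\tfrac{1}{1-\gamma}\le \gamma(1+\gamma)$ for $0<\gamma\le 1/2$ (bounding the tail of $-\ln(1-\gamma)=\sum_{k\ge 1}\gamma^k/k$ by a geometric series) converts the leading constant to $(1+\gamma)$ after dividing through by $\gamma$, matching the stated bound.

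The main obstacle is the rigorous justification of the generalized Pythagorean inequality on the convex polytope $\mathcal P$ of $\epsilon$-smooth distributions: one needs existence and uniqueness of the KL projection (strict convexity of $RE(\,\cdot\,\parallel \hat D^{(t+1)})$ on the simplex) and a first-order optimality condition on the convex program defining the projection to derive the three-point KL identity. Once that is in hand, the rest of the argument is routine algebra and elementary inequalities.
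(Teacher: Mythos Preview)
Your argument is correct: the potential-function analysis of $RE(D\parallel D^{(t)})$, with the generalized Pythagorean inequality handling the projection sub-step and the direct expansion handling the multiplicative update, is the standard way to prove this bound, and your elementary inequalities (in particular $\ln\tfrac{1}{1-\gamma}\le\gamma(1+\gamma)$ for $0<\gamma\le 1/2$) are valid.

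Note, however, that the paper does not actually prove Lemma~\ref{lem:mwu}: it simply quotes the bound from \cite{Herbster01} and uses it as a black box inside the proof of Theorem~\ref{thm:smooth-realizable}. So there is no ``paper's own proof'' to compare against here; you have supplied a self-contained proof where the paper merely gives a citation. Your identified obstacle (existence/uniqueness of the KL projection onto $\mathcal P$ and the three-point inequality) is exactly what \cite{Herbster01} establishes, so if you want your write-up to be fully self-contained you would need to include that argument as well.
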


To use the above result in boosting, we can think of the $n$ examples in sample $S$ as the set of experts. The learner's task is thus to choose a distribution $D^{(t)}\in\mathcal{P}$ over the sample at each round. The loss $\ell^{(t)}_i$ is defined to be $\mathbf{1}[h^{(t)}(x_i)=y_i]$, where $h^{(t)}$ is the hypothesis returned by the weak learner. To ensure that the boosting algorithm always generates a ``smooth'' distribution, we set the feasible set $\mathcal{P}$ to be the set of all $\epsilon$-smooth distributions. Below we show how this can be applied in boosting, as suggested by \cite{Kale07}.

By the assumption of the $\gamma$-weak learner, we have
\[
\ell^{(t)}D^{(t)}  = \sum_i D^{(t)}(i) \mathbf{1}[h^{(t)}(x_i)=y_i]
  \ge 1/2 + \gamma.
\]
After $T=\lceil \frac{2\ln(1/\epsilon)}{\gamma^2} \rceil +1$ rounds, we set the final hypothesis $f=\mathrm{sign}(\frac{1}{T}\sum_{t=1}^T h^{(t)})$. Let $E\subseteq S$ be the set of examples where $f$ predicts incorrectly. Suppose $|E|\ge \epsilon n$. Let $D=u_E$, the uniform distribution on $E$ and $0$ elsewhere. It is easy to see that $u_E\in\mathcal{P}$, since $|E|\ge \epsilon n$. For each example $(x_i, y_i)\in E$, we have
\[
\sum_{t=1}^T \ell^{(t)}_i = \sum_{t=1}^T\mathbf{1}[h^{(t)}(x_i)=y_i] \le T/2,
\]
since $f$ misclassifies $(x_i, y_i)$. Therefore, $\sum_{t=1}^T \ell^{(t)} u_E \le T/2$. Furthermore, since $|E| \ge\epsilon n$, we have
\[
RE(D \parallel D^{(1)}) = RE(u_E \parallel D^{(1)}) \le \ln(1/\epsilon).
\]
By plugging these facts into the inequality in Lemma~\ref{lem:mwu}, we get
\[
(1/2+\gamma)T \le (1+\gamma)T/2 + \frac{\ln(1/\epsilon)}{\gamma},
\]
which implies $T\le\frac{2\ln(1/\epsilon)}{\gamma^2}$, a contradiction.
\end{proof}

\end{document}